\documentclass{article}
\usepackage{amssymb, amsmath, amsthm}
\usepackage{xcolor}
\usepackage{booktabs}
\usepackage{multirow}
\usepackage{graphicx}
\usepackage{subcaption} 
\usepackage{enumitem}
\usepackage[compress]{cite}
\usepackage[final]{corl_2026}

\newcommand{\name}{SlipSense}

\ifdefined\showrevisions
  \newcommand{\revised}[1]{\textcolor{blue}{#1}}
\else
  \newcommand{\revised}[1]{#1}
\fi

\newtheorem{proposition}{Proposition}
\newtheorem{corollary}{Corollary}

\title{SlipSense: Multimodal Tactile Learning for Low-Latency and Generalized Slip Detection}

\author{
  \textbf{Tong Jian \quad Aditya Thurvas Senthil Kumar \quad  Xinyi Li \quad Ziling Chen \quad Tianyu Dai}\\
  \textbf{Ali Sengul \quad Matteo Grimaldi \quad Wenjie Lu \quad Saleh Nabi \quad Tao Yu}\\ \\
  {\normalfont Dexterous AI Group, Analog Devices, Inc.}\\
  {\normalfont\texttt{\{Tong.Jian, Aditya.Kumar, Xinyi.Li, Archer.Chen, Tianyu.Dai, }}\\
  {\normalfont\texttt{\{Ali.Sengul, Matteo.Grimaldi, Wenjie.Lu, Saleh.Nabi, Tao.Yu\}@analog.com}}
}

\hypersetup{
  pdftitle={SlipSense: Multimodal Tactile Learning for Low-Latency and Generalized Slip Detection},
  pdfauthor={Tong Jian, Aditya Thurvas Senthil Kumar, Xinyi Li, Ziling Chen, Tianyu Dai}\\
  \textbf{Ali Sengul, Matteo Grimaldi, Wenjie Lu, Saleh Nabi, Tao Yu}}

\begin{document}
\maketitle

\begin{figure}[hbt!]
  \centering
  \includegraphics[width=\linewidth]{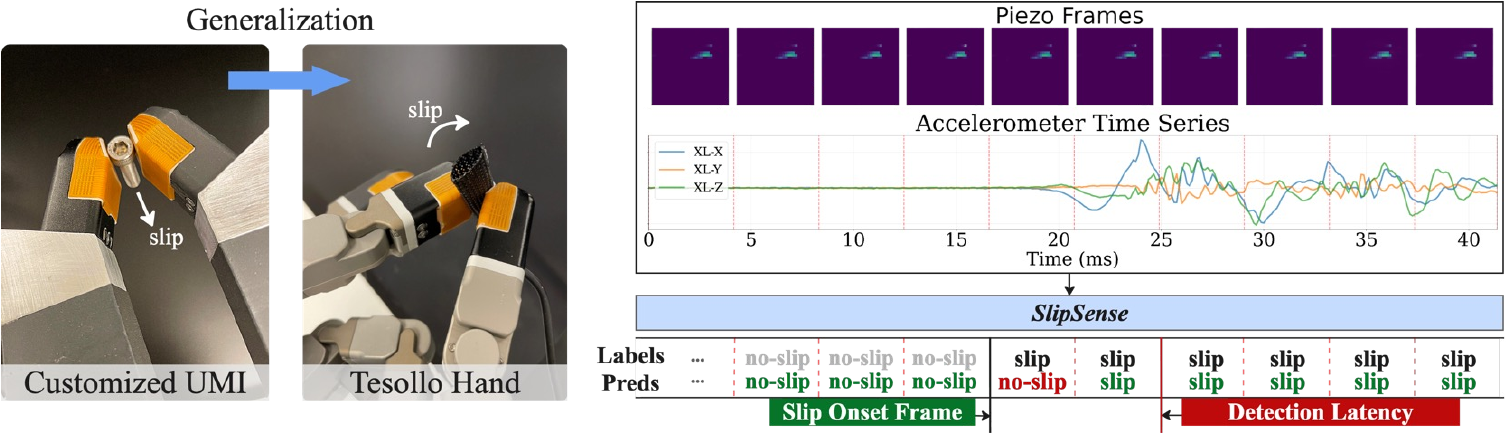}
  \caption{
    \textit{Left:} TacV5 multimodal tactile sensor on UMI gripper and Tesollo hand, combining high-density pressure sensing (Piezo: $32{\times}32$, 240\,Hz, 155\,taxels/cm²) with high-bandwidth vibration sensing (XL: 3-axis MEMS accelerometer, 8\,kHz). \textit{Right:} We introduce \textit{\name}, a multimodal framework that fuses Piezo and XL signals for accurate, low-latency slip detection, achieving cross-object and cross-platform generalization. \revised{Project page coming soon.}}
  \label{fig:highlight}
\end{figure}

\begin{abstract}
Slip detection is fundamental to dexterous manipulation, yet existing systems often lack precise detection latency characterization and cross-platform generalization. We present \name, a multimodal tactile slip detection framework built on TacV5, a compact sensor integrating a $32{\times}32$ piezoresistive array (240\,Hz) and a 3-axis MEMS accelerometer (8\,kHz). The piezoresistive array captures spatial pressure distribution while the accelerometer captures friction-induced vibration, providing complementary slip cues. The framework performs modality-specific encoding, intra-sensor fusion, and cross-modal attention with causal temporal prediction at 240\,Hz. Experiments on a 1.4M-frame dataset spanning 37 objects demonstrate that the two modalities are crucially complementary: it achieves 96.7\% Macro F1 with false-positive rate below 1.6\%, detecting 76\% of slip events within 23.1\,ms. Remarkably, trained purely on a UMI, ~\name~generalizes zero-shot to a Tesollo dexterous hand, transferring across unseen objects, distinct sensor units and new robotic platforms without any retraining.\end{abstract}

\keywords{Slip Detection, Tactile Sensor, Multimodal Tactile Sensing, Multimodal Fusion} 

\section{Introduction}\label{sec:introduction}

Slip detection is fundamental to dexterous manipulation, as slip is often one of the earliest observable indications of grasp instability. Early detection enables corrective action before object loss. In practice, slip rarely occurs abruptly; instead, it emerges progressively, spanning a continuum from localized partial motion (\textit{incipient slip}) to full uncontrolled sliding (\textit{gross slip}). Even subtle relative motion can destabilize a grasp and propagate into failure, making timely detection critical for robust manipulation. The entire slip process is therefore relevant to grasp stability, with particular emphasis on slip onset, referred to as the first occurrence of relative motion at the contact interface.

Slip generates diverse physical signatures at the contact interface, including pressure redistribution, tangential force variation, friction-induced vibration, acoustic emission, and even thermal changes. Each sensing modality captures a partial view through its own distinct transduction mechanisms. As a result, modalities provide complementary cues but distinct failure modes, making multimodal sensing a promising direction for robust slip detection.
Despite decades of progress since early accelerometer-based artificial skin systems in the 1980s~\cite{howe1989}, two key challenges still remain unresolved today~\cite{zhang2026slipsensors}. \textit{\textbf{(i) Precise detection latency}} remains poorly characterized. Detection latency measures the interval between slip onset and the first correct detection, capturing the time needed to accumulate sufficient evidence. This delay often dominates system response time beyond inference speed, yet is rarely reported. Latency also trades off inherently against false alarms. Increasing sensitivity reduces detection delay but raises false positives, and prior work rarely evaluates this trade-off. \textit{\textbf{(ii) Generalization}} across unseen objects, sensor units, and robotic platforms remains limited. Most systems are evaluated under conditions closely matching their development setup, leaving it unclear how well they generalize to novel scenarios. This includes shifts in object properties, sensor-level variations, and robotic systems with different contact geometry and actuations.





To address these challenges, we present \textbf{\textit{\name}}, built on a novel multimodal tactile sensor module (TacV5) integrating a high-density piezoresistive array (Piezo, 240\,Hz, $32{\times}32$, 155\,taxels/cm²) and a high-bandwidth 3-axis MEMS accelerometer (XL, 8\,kHz). These two modalities capture complementary physical signatures of contact. Piezo measures a spatial normal pressure distribution and its temporal evolution across the contact surface, while XL captures friction-induced vibrations arising from relative motion at the interface. Crucially, their failure modes are complementary rather than shared: spatial sensing is susceptible to false positives from non-slip pressure changes such as grasp variation, which produce vibration signatures distinct from slip; whereas vibration sensing can be confounded by environmental disturbances that do not introduce quasi-static force redistribution. 
Jointly processing both streams mitigates each modality's dominant failure mode. Modalities are synchronized across all fingertip sensors, encoded by dedicated backbones, and fused through a causal temporal classifier that performs ternary-class prediction (no-contact, no-slip, slip) at 240\,Hz.

Our contributions are:
\textit{\textbf{(1)~A multimodal learning framework}} demonstrating that Piezo and XL signals are complementary and jointly crucial, with fusion achieving 96.7\% Macro F1 vs. 81--84\% for either modality alone, while reducing false-positive rate below 1.6\%.
\textit{\textbf{(2)~A fast slip detection system}} operating at 240\,Hz, where 76\% of slip events are detected within 23.1\,ms, combining detection latency and model inference.
\textit{\textbf{(3)~Cross-domain generalization.}} We conduct experiments on \textit{1.4M} labeled frames covering \textit{37} diverse objects, demonstrating generalization to unseen objects, distinct sensor instances, and zero-shot transfer from a UMI parallel-jaw gripper to a Tesollo dexterous hand\revised{, spanning unseen articulation dynamics and system noise without retraining}.



The remainder of the paper is organized as follows. Section~\ref{sec:related_work} reviews related work across tactile sensing modalities. Section~\ref{sec:system} describes the sensor hardware and data collection setup. Section~\ref{sec:methodology} presents the multimodal fusion framework. Section~\ref{sec:experiments} reports experimental results demonstrating high-accuracy, low-latency slip detection and generalization across scenarios.
\section{Related Work}\label{sec:related_work}

Tactile slip detection relies on diverse sensing principles, each with characteristic strengths and failure modes. Optical sensors such as GelSlim~\cite{dong2018gelslim}, GelSight Mini~\cite{hu2024gelsightmini}, and GelStereo~\cite{cui2024gelstereo} infer slip from displacement fields at high spatial resolution, achieving upto 95\% accuracy but at 20--60\,Hz, with degradation on unseen objects~\cite{hu2024gelsightmini} and smooth surfaces~\cite{dong2018gelslim,cui2024gelstereo}. Among electrical sensors, shear force plays a central role to slip detection through the friction ratio. Capacitive and piezoelectric sensors can directly capture shear related signals, though the latter is limited to dynamic transients~\cite{romeo2020survey}. Piezoresistive arrays measure normal force and often at low spatial resolution. These constraints weaken spatial contact fidelity, leading most prior work to collapse 2D spatial structure into 1D temporal signals for frequency-domain analysis~\cite{schurmann2010,romeo2017piezoresistive}. Our TacV5 achieves a $0.7\times0.92$\,mm pitch ($38.8{\times}$ higher spatial density than~\cite{schurmann2010} and $6.2{\times}$ than FlexiTac~\citep{huang2025vtrefine}), enabling joint modeling of spatial pressure structure and its temporal evolution.
Vibration-based methods capture friction-induced oscillations~\cite{holweg1996slip} at high bandwidth but respond to environmental disturbances as well~\cite{cravetz2025}. These complementary strengths and weaknesses naturally motivate multimodal fusion. Several multimodal systems have been proposed~\cite{reinecke2014biotac,su2015biotac,xu2025multimodaltactilefingertipdesign,komeno2024incipientslipdetectionvibration,shepherd2025texture,8346698}, demonstrating benefits from combining modalities. However, many do not specifically target slip detection, and none evaluates the two critical factors emphasized here, detection latency and generalization.

Detection latency is the operationally relevant metric for closed-loop control, yet is rarely reported especially alongside false-positive rates. Among the few that measure it: BioTac benchmarks against an object-mounted IMU~\cite{su2015biotac}, Romeo et al.~\cite{Romeo2021AutomaticSlippage} report 76.7\% of trials below 30\,ms, ~\citet{ayral2026reactive} report 20.4\,ms across 20 trials on one single object, and Massalim et al.~\cite{massalim2020deep} report 17\,ms but validate on only three objects. Others report model execution time rather than detection latency~\cite{zhao2025universal}. Generalization is equally limited. Accuracy degrades on unseen objects~\cite{hu2024gelsightmini}, scaling to multifingered hands drops performance~\cite{james2021tmo}. Recent efforts advance generalization over grasp poses on a dexterous hand~\cite{zhao2025universal} and across gripper types ~\cite{cravetz2025}, but do not characterize detection latency alongside transfer performance. To the best of our knowledge, we are the first to jointly evaluate both.

\begin{figure}[t]
  \centering
\includegraphics[width=0.85\textwidth]{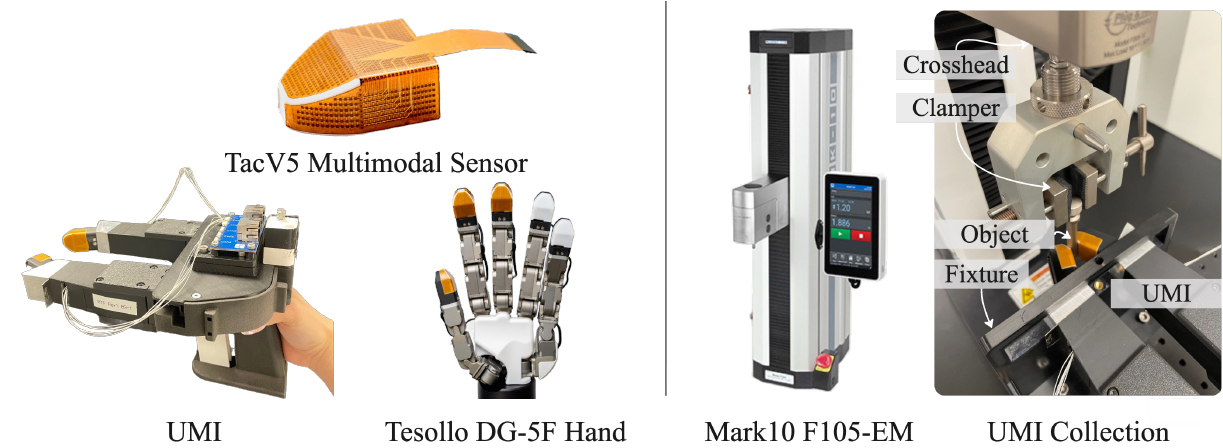}
  \caption{\textit{Left:} TacV5 multimodal sensors on UMI grippers and the Tesollo DG-5F dexterous hand at the thumb, index, and middle fingers. \textit{Right:} Data collection platform. The UMI is mounted on a separate table to physically isolate vibrations from the Mark10 and constrained by a fixture to suppress slip-induced shaking. Objects are secured with an adjustable clamp, while controlled slip is generated through programmable vertical crosshead motion. The operator controls only gripper opening and closing. No external camera is required.
}\label{fig:setup}
\end{figure}

\section{System and Data Acquisition}\label{sec:system}
\subsection{Sensor Specification}
We developed TacV5, a \revised{compact multimodal tactile sensing prototype}. 
The primary modality is a $32{\times}32$ piezoresistive array (Piezo) with 942 active taxels ($0.45\times0.45$\,mm elements at $0.7\times0.92$\,mm pitch, 155\,taxels/cm$^2$) on a flexible polyimide substrate, covering a 0--5\,MPa normal pressure range (i.e., 500\,N on $1\times1$\,cm$^2$ area) at 12 bit resolution (i.e., 4096 levels) and 240\,Hz. The sensor measures normal pressure only. 
The module also includes a 3-axis MEMS accelerometer (XL) sampling at 8\,kHz, capturing high-frequency vibration from frictional dynamics at the contact interface. Additional modalities (MEMS microphone, bone-conduction microphone) are included in the module but not used in this work. Lastly, the module communicates over CAN-FD and streams synchronized multimodal data at 240\,Hz; we refer to each synchronized sample as a \textit{frame}.

Compared to optical tactile sensors such as GelSight~\citep{yuan2017gelsight} and DIGIT~\citep{lambeta2020digit}, TacV5 offer a thinner form factor, greater mechanical robustness, and requires neither internal illumination nor cameras. Its compact design fits within a standard fingertip pad and remains mechanically independent of gripper actuation, enabling deployment across platforms without sensor-side modification. As shown in Fig.~\ref{fig:setup}, we integrate TacV5 on two platforms: a UMI~\cite{chi2024umi} parallel-jaw gripper with two sensor units (one per jaw), and a Tesollo DG-5F dexterous hand~\cite{tesollo_dg5fm} with sensor units to three fingers.
\begin{figure}[t]
  \centering
  \includegraphics[width=0.9\textwidth]{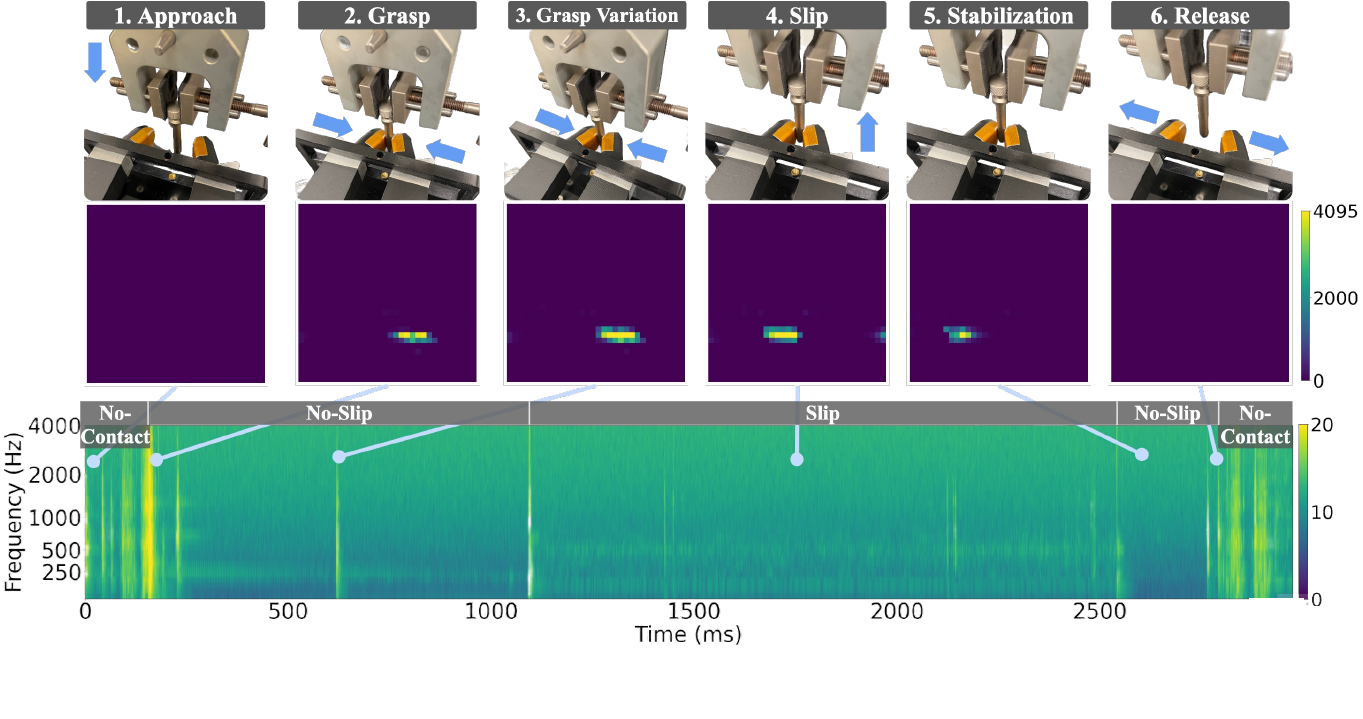}
  \caption{Data collection protocol. Each collection follows a six-step sequence, illustrated with UMI and one sensor's readings: (1)~crosshead descends and holds; (2)~gripper closes; (3)~operator varies grasp force to diversify contact conditions; (4)~crosshead ascends to induce slip; (5)~crosshead stops; (6)~gripper opens. Slip onset and ongoing slip produce distinct XL signatures, while non-slip events such as grasp variation (step~3) and gripper actuation (step~2$\&$6) also generate strong vibration responses. Disambiguating these cases requires spatial pressure cues from Piezo, motivating multimodal fusion.}
  \label{fig:system}
\end{figure}
\subsection{Data Collection and Labeling}\label{sec:data_collection}
A fundamental challenge in learning-based slip detection is obtaining reliable ground-truth labels. Existing methods infer slip onset from optical marker displacement~\cite{dong2018gelslim,james2018tactip}, friction based criteria on the sensor signal itself~\cite{Romeo2021AutomaticSlippage,higuera2024sparsh}, or human annotations~\cite{cui2024gelstereo, zhao2025universal}. Since labels are often derived from the sensor outputs or subjective observation, slip onset remains ambiguous near the transition boundary. Following the idea of directly measuring object displacement with linear encoders~\cite{massalim2020deep}, we adopt a Mark-10 F105-EM motorized test stand. As shown in Fig.~\ref{fig:setup}, the crosshead clamps one end of the object while the sensor-equipped gripper holds the other. For non-stretchable objects, crosshead displacement directly translates to object slip at the gripper surface. Crosshead position is recorded at 0.02\,mm resolution, enabling sub-millimeter slip labels independent of the sensor signal. \revised{The Mark-10 serves solely as a labeling instrument to provide ground-truth labels with high precision; it is absent at deployment.} The non-stretchable constraint applies only during data collection and does not affect deployment.

Each collection follows a fixed six-step sequence (Fig.~\ref{fig:system}). Slip occurs only during the crosshead ascent (step~4). Slip onset is defined when cumulative crosshead displacement exceeds 0.07\,mm\revised{, safely above the Mark-10 encoder resolution}. 
Slip termination is determined using the same criteria when the crosshead stops. All frames within this interval are labeled as \textit{Slip}. Frames with contact but outside this interval are labeled as \textit{No slip}, while frames are labeled as \textit{No contact} \revised{when the mean taxel value falls below one ADC count, a universal threshold shared across all objects and sensors without per-object calibration}. During slip, Piezo exhibits spatial pressure shift while XL shows sustained vibration patterns in the log-mel spectrogram. These patterns indicate that discriminative features are already present in the raw sensor streams before any learned encoding.



\begin{figure}[t]
\centering
  \includegraphics[width=0.9\linewidth]{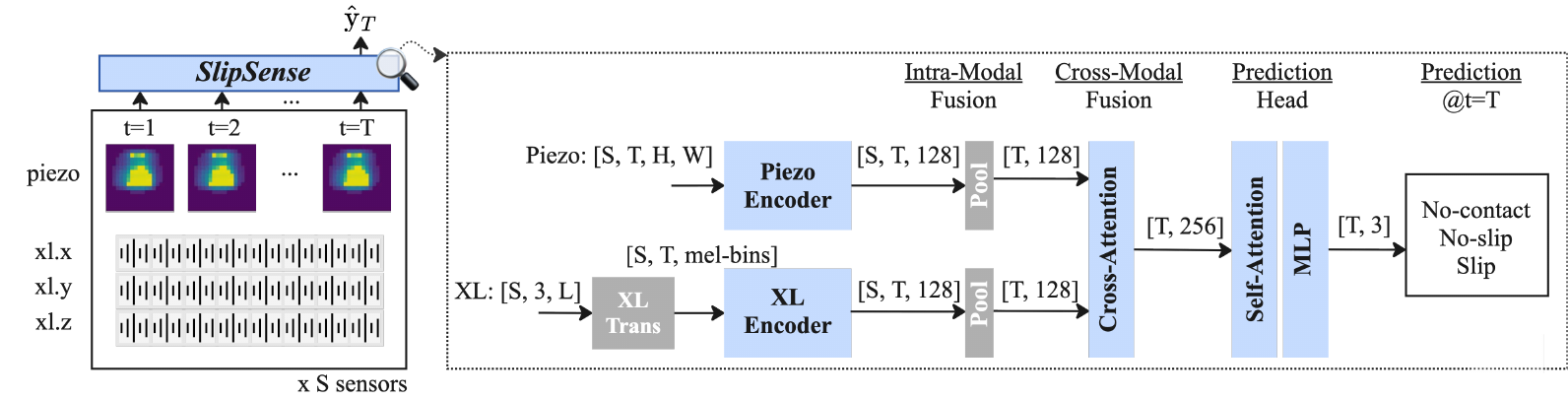}
  \caption{
    Overview of \name. Given an observation window of length $T$, Piezo and XL signals are independently encoded into modality specific embeddings at each frame, followed by intra-modal fusion and cross-modal fusion. A causal attention module with a prediction head outputs a 3-class prediction for the last frame at 240\,Hz.
  }
  \label{fig:arch}
\end{figure}

\section{Multimodal Learning}\label{sec:methodology}
The two sensing modalities, Piezo and XL, differ substantially in temporal granularity, spatial structure, and signal statistics, making naive fusion ineffective. In this section, we present a causal, learning-based \name \,framework (Fig.~\ref{fig:arch}). 

\subsection{Problem Formulation}
Let $S$ denote the number of sensor units and $T$ the observation window length in frames. At each frame $t$, sensor $s \in \{1, \ldots, S\}$ produces a Piezo tactile image $\mathbf{p}_{t,s} \in \mathbb{R}^{H \times W}$ and an XL sequence $\mathbf{a}_{t,s} \in \mathbb{R}^{3 \times L}$, acquired over the time interval $[t-1, t]$. The observation at frame $t$ is defined as $\mathcal{O}_t = \left\{\,\mathbf{p}_{s,t},\;\mathbf{a}_{s,t} \;\middle|\; s \in [S]\,\right\}.$

Each frame carries a label $y_t \in \mathcal{Y} = \{0, 1, 2\}$, corresponding to the \textit{no-contact}, \textit{no-slip}, and \textit{slip} states defined in Sec.~\ref{sec:data_collection}.
The objective is to learn a classifier $f_\theta$ that takes an observation sequence as input and predicts the label of its last frame $T$ alone:
\begin{equation}                                                \hat{y}_T = f_\theta(\mathcal{O}_{1:T}), \qquad \hat{y}_T \in \mathcal{Y}. \label{eq:task}                               \end{equation}

\subsection{\name~Framework}

\textbf{Piezo encoder.}
We adopt a lightweight convolutional architecture. The network consists of three sequential blocks, each with a $3\times3$ convolutions, ReLU activations, and $2\times2$ max pooling, followed by a fully connected layer that produces a Piezo embedding $\mathbf{z}^{p}\in\mathbb{R}^{d}$ with $d{=}128$. We also explored larger and pretrained alternatives (Appendix~\ref{app:encoder}), but found this compact design both more effective and better suited for low-latency inference.

\textbf{Accelerometer encoder.}
Accelerometer signals share the frequency-rich temporal structure of audio, motivating a spectral encoding approach. We apply a 10\,Hz high-pass filter\revised{, with the cutoff chosen empirically to remove gravity and gross gripper motion while preserving higher-frequency slip-induced features}. We then compute per-axis log-mel spectrograms from a causal 16\,ms analysis window aligned to each Piezo frame. The three per-axis spectrograms are aggregated via
log-sum-exp to obtain a slip-direction-invariant spectrogram (proof in Appendix~\ref{app:xl_invariance}). The merged spectrogram exhibits clear slip signatures visible in both low-frequency and high-frequency bands (Fig.~\ref{fig:system}). We encode it using SSAST-Tiny~\cite{gong2022ssast}, producing a per-frame embedding $\mathbf{z}^{xl} \in \mathbb{R}^{128}$. Transformation and architectural details are in Appendix~\ref{app:xl_encoder}.

\textbf{Intra and cross modality fusion.}
Platforms may carry different numbers of sensor units $S$. To handle this variable-cardinality setting, we aggregate per-sensor embeddings within each modality using element-wise max pooling, i.e., $\mathbf{z}_t^{p}=\max_{s\in[S]}\mathbf{z}_{t,s}^{p}$. This produces an embedding independent of sensor count, providing the potential for zero-shot transfer across different platform configurations. \revised{This design intentionally discards finger identity, making detection agnostic to finger count.} To integrate the Piezo and XL features, we apply multi-head cross-attention (MHAttn):
\begin{align}
  \mathbf{z}_t^{p\prime}  &= \mathrm{LayerNorm}\!\left(\mathbf{z}_t^{p}
    + \mathrm{MHAttn}(Q{=}\mathbf{z}_t^{p},\;  K{=}\mathbf{z}_{\leq t}^{xl},\; V{=}\mathbf{z}_{\leq t}^{xl})\right),  \label{eq:cross1} \\
  \mathbf{z}_t^{xl\prime} &= \mathrm{LayerNorm}\!\left(\mathbf{z}_t^{xl}
    + \mathrm{MHAttn}(Q{=}\mathbf{z}_t^{xl},\; K{=}\mathbf{z}_{\leq t}^{p},\;  V{=}\mathbf{z}_{\leq t}^{p})\right),  \label{eq:cross2}
\end{align}
The fused representation concatenates both updated embeddings,
$
  \mathbf{z}_t^{\mathrm{fused}} = \bigl[\mathbf{z}_t^{p\prime};\;\mathbf{z}_t^{xl\prime}\bigr]
  \in \mathbb{R}^{d_p + d_{xl}}.
$

\textbf{Temporal frame-based predictor.}
The fused embeddings $\{\mathbf{z}_{t}^{\mathrm{fused}}\}_{t=1}^{T}$ are processed by a causal self-attention layer that accumulates temporal context. A prediction head produces per-frame logits $\hat{y}_{t} \in \mathcal{Y}$. Training supervises only the last frame $T$, which has access to the full observation window:
\begin{equation}
  \mathcal{L} = \mathrm{CE}(\hat{y}_T,\, y_T).
  \label{eq:loss}
\end{equation}
During inference, the model can output prediction per incoming frame, enabling real-time operation at 240\,Hz when inference completes within one frame interval.


\begin{figure}[t]
  \centering
  \includegraphics[width=\linewidth]{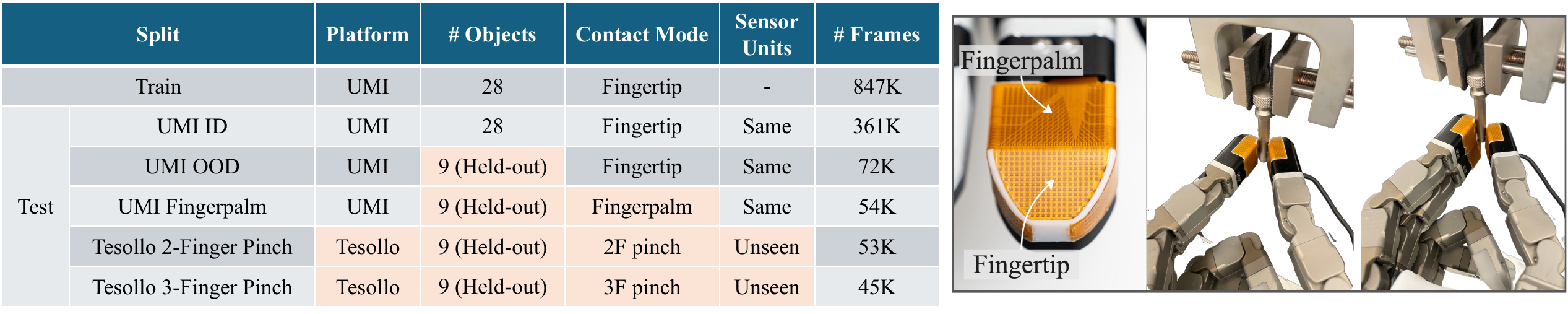}
  \caption{\textit{Left:} Dataset overview. Five test splits progressively evaluate generalization across held-out objects, unseen contact region, and new robotic platforms with physically distinct sensor units, evaluated using Tesollo 2-finger and 3-finger pinch \textit{(Right)}.
  }
  \label{fig:dataset}
\end{figure}

\section{Experiments}\label{sec:experiments}
We evaluate our framework from four perspectives: (1) benefits of multimodal fusion, (2) generalizability across unseen objects, contact regions, sensor units, and robotic platforms, (3) detection latency characterization, and (4) ablations on key design factors affecting performance.

\subsection{Experimental Setup}

\textbf{Datasets.}
We collected a dataset of over 1.4\,M frames from 37 objects spanning diverse shapes and surface materials (Appendix Fig.~\ref{fig:object}), with 28 objects used for training and 9 held out for evaluation. Data were collected under five slip speeds (2--18\,mm/s) and varied grasp forces (Appendix Fig.~\ref{fig:coverage}). To reduce artifacts and false positives, we additionally include control scenarios without induced slip (Appendix~\ref{app:dataset}). The resulting dataset contains 57.6\% \textit{no-contact}, 23.7\% \textit{slip}, and 18.8\% \textit{no-slip} frames. We evaluate using five test splits with progressively increasing distribution shift (Fig.~\ref{fig:dataset}).

\textbf{Implementation details.}
We use a default observation window of $T{=}60$ (250\,ms). Piezo frames use raw voltage outputs normalized by the 12 bit ADC range of 4096\revised{; no per-unit calibration is applied. Training uses 2 UMI sensor units, while evaluation spans 2 UMI and 3 Tesollo units over a 4-month period, covering unit-to-unit variation, drift, and aging}. Modality specific augmentation is applied during training. All experiments are repeated with 3 random seeds and reported as the mean. Full implementation details are provided in Appendix~\ref{app:implementation}.

\textbf{Evaluation metrics.}
We report: \textit{Detection latency}, the median delay from slip onset to first detected slip frame; \textit{False Positive Rate (FPR)}, the fraction of frames incorrectly predicted as slip; \textit{Accuracy}, the fraction of correctly classified frames; and \textit{Macro F1}, the equally weighted class averaged F1.

\textbf{Baselines.}
We compare against three analytic baselines: \textit{Piezo only}, adapted from pressure entropy methods~\cite{hu2024gelsightmini}; \textit{XL only}, using MFCC features and QDA~\cite{Yoo_2021}; and \textit{Piezo-XL}, combining both via AND logic. Full implementation details are provided in Appendix~\ref{app:baseline}.

\subsection{Analysis and Discussion}

\begin{table*}[t]
    \caption{Multimodal Fusion. Latency is reported as the mean of per-seed median detection latency (in frames) across 3 random seeds; baselines are analytic methods. Fusing Piezo and XL achieves the lowest latency and FPR while substantially improving Macro F1 over either modality alone. }
    \label{tab:modality_ablation}
    \begin{center}
    \resizebox{\linewidth}{!}{
    \centering
    \begin{tabular}{c cc| cc cc cc cc}
    \toprule
        \multirow{2}{*}{Method} &
        \multicolumn{2}{c|}{Modality} &
        \multicolumn{4}{c}{UMI ID} &
        \multicolumn{4}{c}{UMI OOD} \\
        \cmidrule(lr){2-3}
        \cmidrule(lr){4-7}\cmidrule(lr){8-11}
        & Piezo & XL
        & Lat (fr) $\downarrow$ & FPR (\%) $\downarrow$ & Acc (\%) $\uparrow$ & F1 (\%) $\uparrow$
        & Lat (fr) $\downarrow$ & FPR (\%) $\downarrow$ & Acc (\%) $\uparrow$ & F1 (\%) $\uparrow$ \\
    \midrule
        \multirow{3}{*}{Baselines} 
        & \checkmark &            & \textbf{1.00} & 21.69 & 72.87 & 74.51 & 2.00 & 21.08 & 73.84 & 75.29 \\
        &            & \checkmark & \textbf{1.00} & 22.98 & 79.41 & 81.79 & 2.00  & 19.31 & 82.99 & 84.55 \\
        & \checkmark & \checkmark & \textbf{1.00} & 9.28  & 81.16 & 79.96 & 2.00 & 7.40 & 83.33  & 82.18 \\
    \midrule
        \multirow{3}{*}{SlipSense}  & \checkmark & 
        & 20.50 &  7.54 & 84.63 & 83.54 & 36.17 &  8.43 & 82.69 & 81.91 \\
        &           & \checkmark 
        &  4.67 &  2.14 & 84.21 & 81.61 &  3.00 &  3.24 & 84.74 & 81.33 \\
        & \checkmark & \checkmark
        &  2.00 &  \textbf{1.33} & \textbf{96.98} & \textbf{96.77} &  \textbf{1.50} &  \textbf{1.57} & \textbf{95.99} & \textbf{95.75} \\
    \bottomrule
    \end{tabular}
    }
    \end{center}
    \vspace{-5pt}
\end{table*}

\begin{figure}[t]
  \centering
  \includegraphics[width=0.95\linewidth]{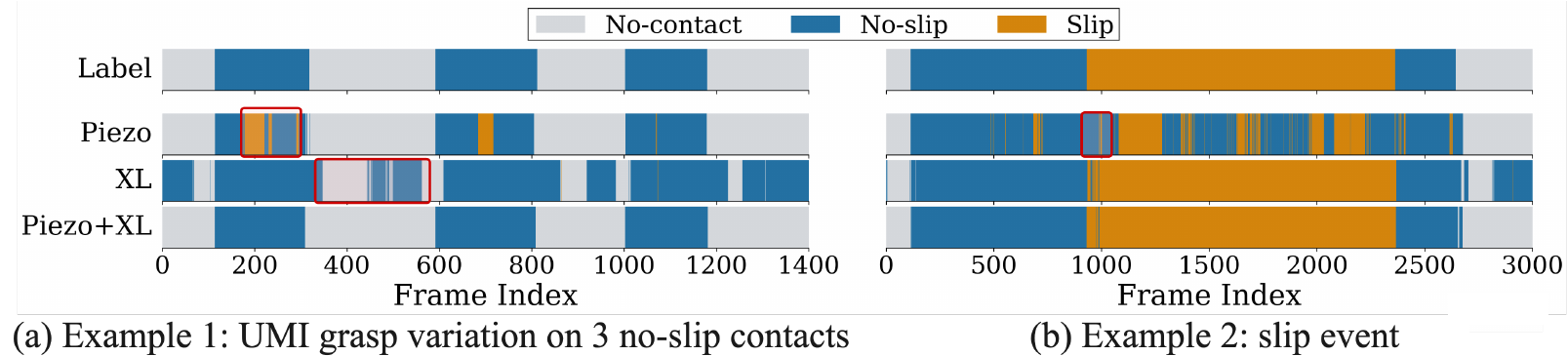}
  \caption{
    Prediction examples. Failure modes are highlighted in red: Piezo can confuse grasp variation with slip and delay slip onset detection, while XL is sensitive to gripper aftershocks during no contact. Fusion suppresses both failure modes, achieving low latency and low false positives.
  }
  \vspace{-5pt}
  \label{fig:example}
\end{figure}

\begin{table}[t!]
    \caption{
    Zero-shot generalization. One single model trained on UMI train set, evaluated zero-shot across three transfer axes: contact location, platform, and finger count.   
    }
    \label{tab:transferability}
    \begin{center}
    \resizebox{0.8\linewidth}{!}{
    \centering
    \begin{tabular}{lccccc}
    \toprule
        Transfer Scenario &
        Setup &
        Lat (fr) $\downarrow$ & FPR (\%) $\downarrow$ & Acc (\%) $\uparrow$ & F1 (\%) $\uparrow$ \\
    \midrule
        Contact location & UMI Finger Palm  & 3.00 & 1.99 & 94.83 & 94.56 \\
        Platform       & Tesollo 2-Finger & 3.00 & 1.14 & 95.89 & 94.39 \\
        Finger count   & Tesollo 3-Finger & 5.00 & 3.12 & 90.85 & 87.24 \\
    \bottomrule
    \end{tabular}}
    \end{center}
\end{table}

\textbf{Piezo and XL are crucially complementary.}
Table~\ref{tab:modality_ablation} shows an inherent sensitivity-specificity tradeoff in single-modality methods: analytic baselines achieve low latency but produce high FPR (19--22\%), while learned models reduce FPR at the cost of increased latency. \revised{Fig.~\ref{fig:example} illustrates how multimodal sensing can improve the tradeoff: the Piezo modality provides contact grounding but confuses grasp variation with slip and misses subtle slip cues, whereas the XL modality is sensitive to slip but lacks contact context and responds to environmental vibrations. Their failure modes are complementary, making fusion a natural path to improving both metrics simultaneously. Importantly, the fusion strategy matters. Analytic AND fusion reduces FPR but suppresses true slip detections significantly. In contrast, \name~achieves the best across all metrics, combining higher Macro F1 and lower FPR with sub-frame latency. Further comparisons show that the cross-attention module outperforms concatenation and late fusion alternatives (Appendix~\ref{app:ablation_fusion}), suggesting that fusion strategy contributes meaningfully to these gains.}


\textbf{Generalization.}
Unseen-object results (UMI OOD) in Table~\ref{tab:modality_ablation} show additional latency needed for baseline to maintain similar detection performance, indicating slightly more challenging test cases. Notably, our models show minor degradation, suggesting generalizability over object contact patterns.
Table~\ref{tab:transferability} further evaluates held-out objects under progressively harder conditions across contact regions, sensor units, and robotic platforms with multi-finger interactions. Fingertip-to-fingerpalm transfer maintains strong performance, indicating the learned features are not tied to a specific contact location. UMI-to-Tesollo presents a more challenging transfer setting \revised{due to articulation dynamics and system noise unseen during UMI training. Tesollo's direct joint actuation changes contact stiffness and introduces distinct vibration signatures in XL. The XL augmentation is designed to reduce sensitivity to platform-specific signatures while preserving slip-relevant features (Appendix~\ref{app:ablation_aug}).} Despite such differences, the 2-finger pinch achieves 94.39\% Macro F1. The 3-finger pinch further alters contact geometry, widening the distribution gap from training, yet performance remains reasonable at 87.24\% Macro F1. Across all settings, \name~generalizes without targeted fine-tuning, suggesting it captures slip signatures intrinsic to the contact interface rather than artifacts of specific platforms or sensors.


\textbf{Detection latency characterization.} 
Figure~\ref{fig:ablation}(a) shows that slip can be detected almost immediately after onset: 43\% of trials are detected at the onset frame and 76\% within 20.8\,ms. Most long latency cases occur in the 2\,mm/s setting. This is the steady crosshead speed, while motion near slip onset is even slower. Even at 2\,mm/s, the object moves only 0.5\,mm during the 250\,ms observation window, close to the tactile array resolution limit, weakening both pressure migration and friction induced vibration. Consistent with this, detection latency decreases with slip speed (Appendix~\ref{app:speed}); at 18\,mm/s, over 90\% of trials can be detected within 20.8\,ms. Model inference averages 2.3\,ms on an NVIDIA RTX A4500 (measured over 1,000 runs), well within the 4.17\,ms frame budget at 240\,Hz. Combined with detection latency, 76\% of slip events are detected within 23.1\,ms overall.
  
\textbf{Choice of observation window T.} 
Fig.~\ref{fig:ablation}(b) shows Macro F1 as a function of the observation window $T$. Performance rises steeply from $T{=}6$\,fr (93.7\%) to $T{=}60$\,fr (96.7\%) and plateaus thereafter. Longer windows also increase variance across seeds, suggesting that excessive temporal context can dilute the local slip cues near the prediction frame. The saturation is physically grounded. At the slowest speed of 2\,mm/s, one taxel of displacement requires approximately 450\,ms, explaining the modest gains up to $T{=}120$\,fr (500\,ms). At higher slip speeds, $T{=}60$\,fr (250\,ms) already captures sufficient contact evolution, making longer windows largely redundant. 

\textbf{Piezo spatial resolution benefits.}
Fig.~\ref{fig:ablation}(c) shows the benefit of higher Piezo resolution under multimodal fusion. Lower resolutions ($16{\times}16$ and $8{\times}8$) are generated from native $32{\times}32$ measurements through spatial striding, while keeping the fusion pipeline unchanged. As expected, even an $8{\times}8$ array improves over XL-only, indicating that coarse spatial pressure already provides complementary information. Performance continues to improve with resolution, highlighting the value of fine-grained Piezo sensing and motivating the design towards higher resolution tactile arrays.

\begin{figure}[t]
  \centering
  \includegraphics[width=1.0\linewidth]{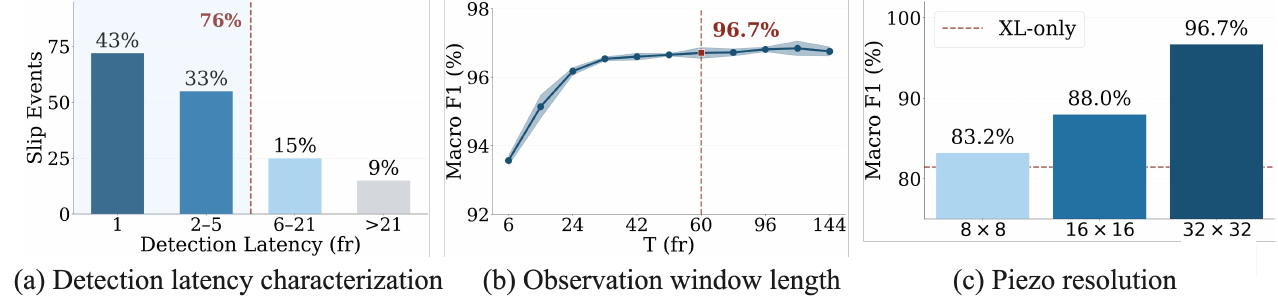}
  \caption{Results on UMI test splits. (a)~76\% slip events can be detected within 5 frames (20.8\,ms detection latency; 23.1\,ms including model inference). (b)~Performance saturates at $T{=}60$\,fr (250\,ms). (c)~Fusion performance consistently improves with Piezo resolution.}
  \label{fig:ablation}
\end{figure}

\subsection{Real-World Deployment and Slip Prevention Strategy}
To validate practical utility, we integrate \name~into a closed-loop re-grasp controller on the Tesollo hand, where \revised{four operators perform 100 trials on 10 unseen objects, including upward, downward, rotational, and oblique pulls at varied angles with two instructed speeds, to induce slip}. Upon slip detected, the fingers move inward to increase force until the object is secured. Although training data contains only vertical linear slip, \name~generalizes to varied slip directions and rotational slip, detecting all slip events in 100/100 trials without false alarms (Appendix~\ref{app:realworld}). These detections enable the controller to prevent object loss in 95/100 trials, with a detection-to-peak-force latency of $\sim$50\,ms. Importantly, 5 failures stem from the prevention strategy, not missed detections: when the cable is initially grasped near the sensor edge, the coarse inward motion pushes it out despite correct slip detection.

\section{Conclusions}\label{sec:conclusion}
We presented \name, a multimodal tactile slip detection framework that fuses $32{\times}32$ piezoresistive arrays with 8\,kHz accelerometers. Piezo and XL are complementary: Piezo grounds contact state and suppresses false alarms from environmental vibration, while XL captures friction transients during slip. Their fusion achieves 96.7\% Macro F1 with 76\% of slip events detected within 23.1\,ms. The learned model transfers zero-shot from a UMI parallel-jaw gripper to a Tesollo dexterous hand across unseen objects, distinct sensor units, and different contact configurations, suggesting that the model captures slip signatures intrinsic to the contact interface rather than platform specific artifacts. Integrated into a closed-loop re-grasp controller as a proof of concept, TacV5 and \name~ demonstrate the potential for fast tactile feedback in dexterous manipulation. Together, these results suggest a promising direction towards native integration of multimodal tactile in dexterous robots.

\section{Limitations} \label{sec:limitations}
 While \name\ demonstrates low-latency detection and zero-shot cross-platform transfer, several limitations remain. Data are collected under a single linear slip axis; rotational and multi-directional slip are successfully detected in real-world deployment (100/100 trials) but without quantitative latency characterization, and in-motion slip detection during active robot manipulation remains to be quantitatively validated as well. The piezoresistive array measures only normal pressure and infers shear implicitly from temporal frame sequences; direct shear sensing may provide further discriminative signal. Although evaluated on 37 objects, broader object and environmental coverage are natural next steps. Finally, inference is benchmarked on a GPU workstation; deployment on embedded hardware would enable a fully self-contained slip detection module.
  
\clearpage
\acknowledgments{\revised{We thank Greg Freeburn for support with equipment procurement, the Mark-10 package, and for setting up the data collection environment; Zachary Corriveau for support with the UMI platform and lab setup; and Jorge Alejandro and Michael Morganto for timely sensor support and helpful discussions on XL.
}}

\bibliography{references}
\clearpage
\appendix
\section*{Appendix}
This appendix provides supplementary material in \revised{nine} parts:
customized UMI setup (App.~\ref{app:umi}), XL transformation and encoder details (App.~\ref{app:xl_encoder}), experimental setup (App.~\ref{app:exp}), ablations on encoder architectures and inference cost (App.~\ref{app:encoder}), \revised{ablation on XL augmentation (App.~\ref{app:ablation_aug}), ablation on fusion strategy (App.~\ref{app:ablation_fusion}),} detailed detection latency characterization by slip speed (App.~\ref{app:speed}), real-world \name~deployment on UMI and Tesollo (App.~\ref{app:realworld}), and proof of direction invariance of merged log-mel spectrogram (App.~\ref{app:xl_invariance}).

\section{Customized UMI}\label{app:umi}
\begin{figure}[h!]
  \centering
  \includegraphics[width=0.7\linewidth]{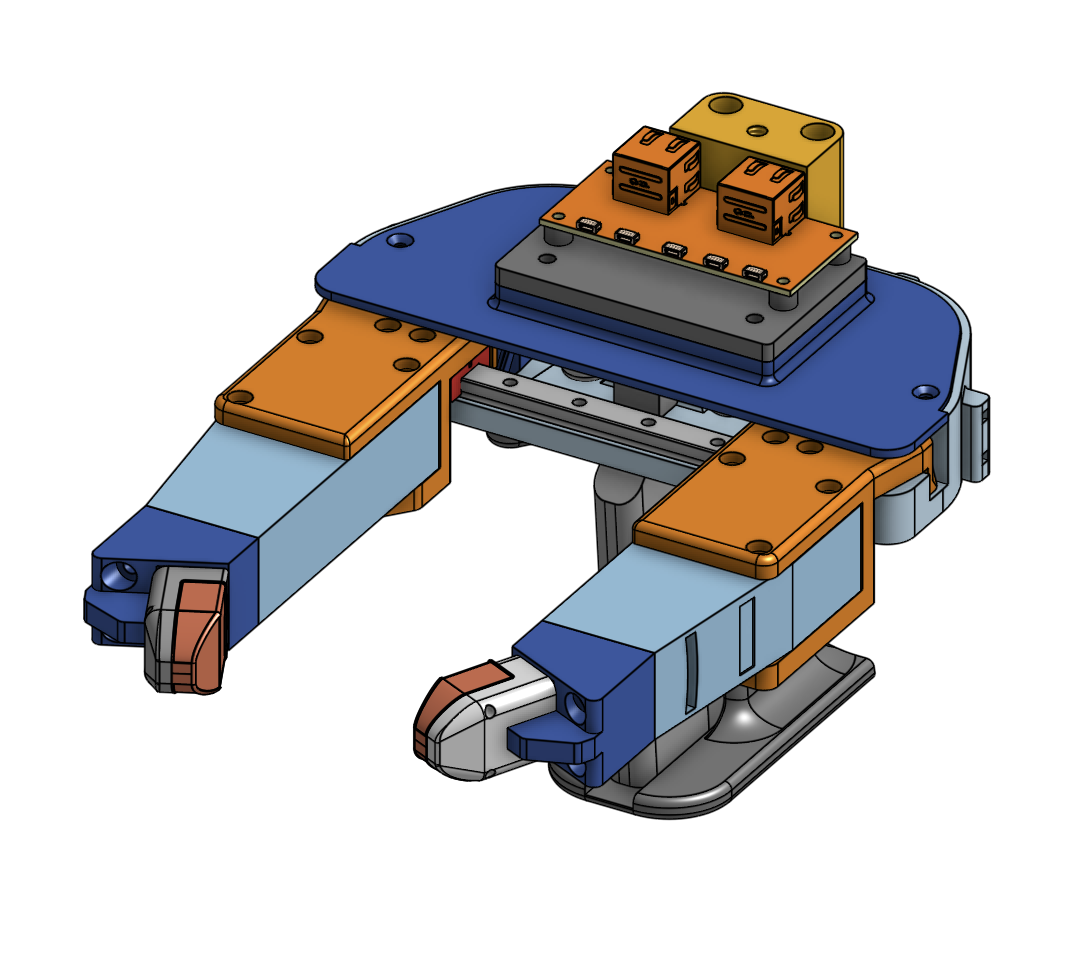}
  \caption{
A customized UMI was developed to augment the data-collection interface with tactile sensing. Two redesigned finger-adapter assemblies were mounted to the UMI body, enabling attachment of dexterous-hand fingertips equipped with tactile sensors, shown in orange. A top-mounted aggregation board, also shown in orange, was installed at the center of the main body to consolidate and synchronize data streams from the two tactile fingers.
  }
  \label{fig:umi}
\end{figure}

\section{Accelerometer Transformation and Encoder Details}\label{app:xl_encoder}
For each axis $k \in \{x,y,z\}$, we compute a causal log-mel spectral frame $\mathbf{M}_{t,k} \in \mathbb{R}^{F}$ from a 16\,ms analysis window ending at frame $t$, with a frame shift of $1/240$\,s aligned one-to-one with each Piezo frame. Each spectral frame uses only 
past and current observations without future information.                 
Contact vibrations project differently onto each accelerometer axis depending on slip direction, making per-axis spectrograms direction-dependent. While this property can be useful for slip direction estimation~\cite{a_slip_2026}, our data are collected under a fixed direction and we aim for direction-invariant generalization. We therefore aggregate across axes using log-sum-exp:
\begin{equation}                                               \mathbf{M}_{t}                             
  = \log\!\sum_{k \in \{x,y,z\}} \exp\!\left(\mathbf{M}_{t,k}\right) \in \mathbb{R}^{F},
\label{eq:logsum}
\end{equation}
This yields a provably direction-invariant representation (Appendix~\ref{app:xl_invariance}). 

The merged spectrogram is encoded by SSAST-Tiny~\cite{gong2022ssast}, an AST-family model~\cite{gong2021ast}. Contact signals exhibit sparse spectral patterns compared to speech, so we reduce the standard 128 mel bins to $F{=}16$, using non-overlapping $16{\times}1$ patches that collapse the frequency dimension to a single embedding per time step. The patch-embedding projection is reinitialized to accommodate the reduced input resolution, while all transformer layers retain pretrained weights. The encoder outputs a per-frame
embedding $\mathbf{z}^{xl} \in \mathbb{R}^{128}$.

\section{Experimental Details}\label{app:exp}
\subsection{Dataset}\label{app:dataset}
\begin{figure}[h!]
  \centering
  \includegraphics[width=0.9\linewidth]{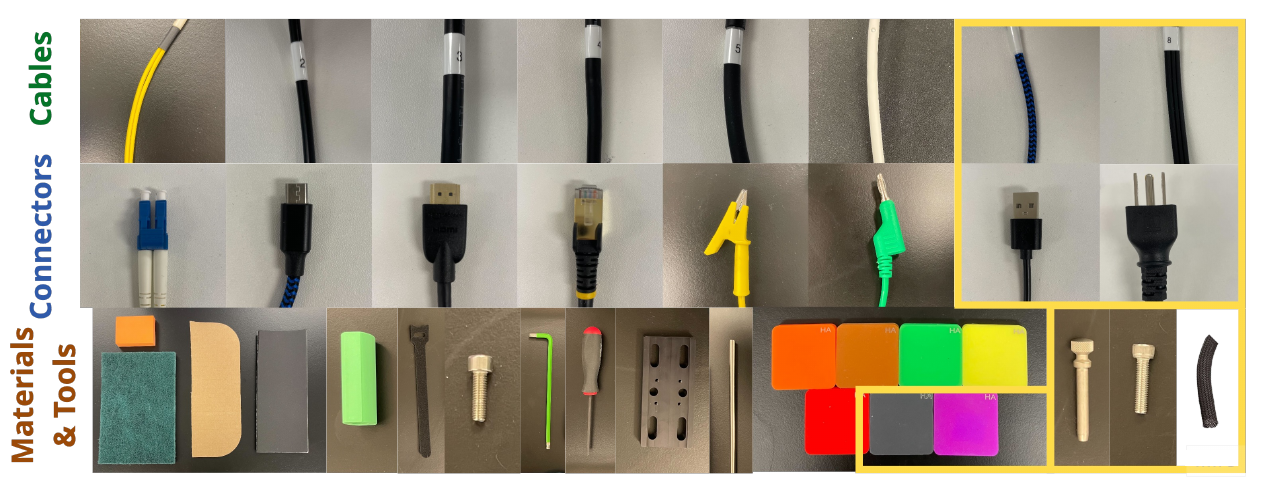}
  \caption{
    37 objects spanning cables, connectors, materials, and tools. We train on 28 objects and reserve 9 held-out objects for evaluation (yellow boxes). The held-out set further includes Shore hardness test blocks, with hardness levels 81\,HA and 88\,HA unseen from training to evaluate generalization across material properties.
  }
  \label{fig:object}
\end{figure}

Slip data were collected under five crosshead speeds ranging from 2--18\,mm/s and a human natural range of grasp forces. Fig.~\ref{fig:coverage} shows the distribution of total contact pressure across the training and test sets. The training set is collected by a human operator who intentionally varies grasp force, naturally reflecting the range of pressures encountered in human-operated grasping. The resulting training distribution (blue) covers the full range of contact forces observed during testing (pink), with both concentrated below 200\,kPa$\cdot$taxels and a long tail extending beyond 400\,kPa$\cdot$taxels. This confirms that the model is not evaluated outside its trained pressure regime. 
\begin{figure}[h!]
  \centering
  \includegraphics[width=0.7\linewidth]{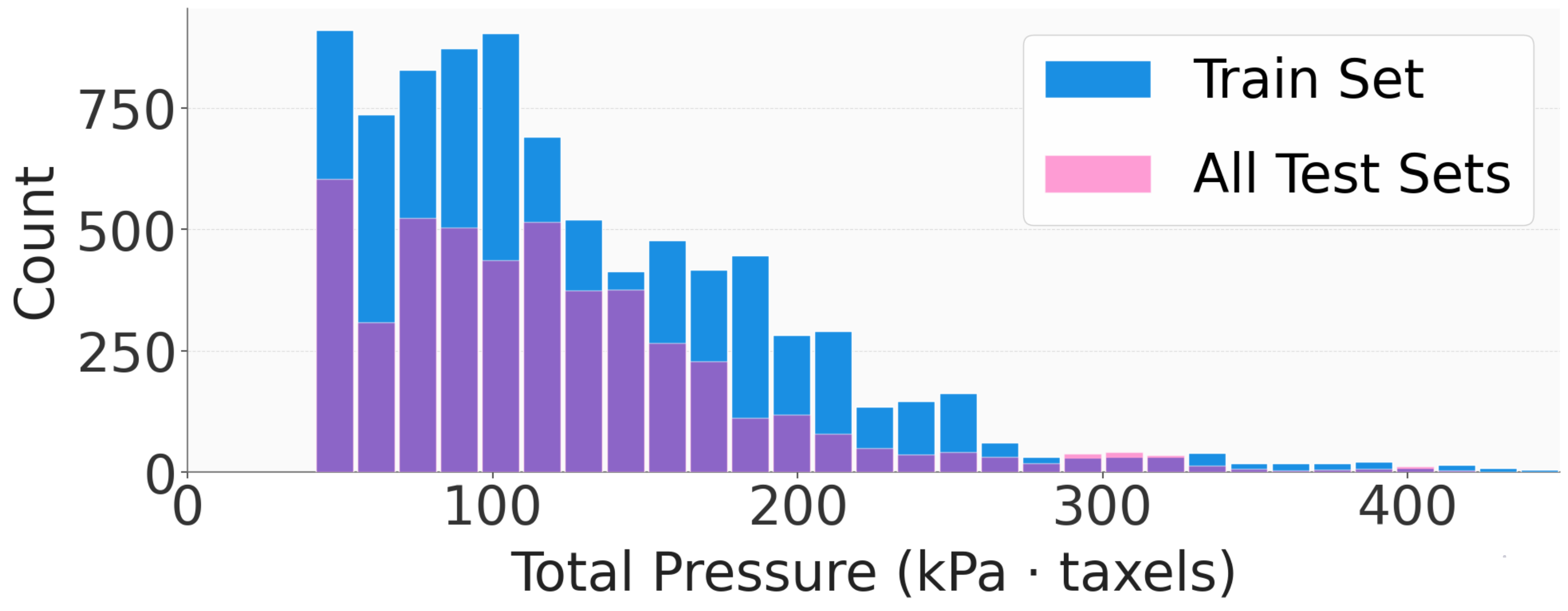}
  \caption{
    Contact pressure coverage. Distribution of total pressure (kPa$\cdot$taxels) for the training set (blue) and all test sets (pink). Training data is collected by a human operator intentionally varying grasp force, naturally reflecting the human-operated pressure range and covering the full test distribution.    
  }
  \label{fig:coverage}
\end{figure}

To reduce Mark10 induced artifacts and suppress false positives, we include three control scenarios in the training set, each with 5 trials:

(1) Crosshead motion with the UMI gripper not in contact with any object;

(2) Crosshead motion while the UMI gripper holds an object not linked to the crosshead;

(3) Free-grasping and pick-and-place motions using the UMI without induced slip.

\subsection{Implementation Details}\label{app:implementation}
We use a default observation window length of $T{=}60$. Piezo augmentation includes random rotation up to $360^\circ$, spatial rolling up to $\pm30\%$, horizontal and vertical flipping, and additive Gaussian noise with $\sigma=0.001$. XL log-mel spectrograms are standardized across frequency bins per frame and augmented with frequency masking of up to 5 bins applied with 50\% probability, together with additive uniform noise. All models are trained for 10 epochs with learning rate $1\times10^{-5}$ and batch size 32. Training uses a 5 epoch linear warmup followed by MultiStepLR decay with $\gamma=0.5$. We apply Exponential Moving Average for stable inference.

\subsection{Baseline Details}\label{app:baseline}
We compare against three analytic baselines: \textit{(1) Piezo-only}, adapting the entropy-based slip indicator from~\citet{hu2024gelsightmini} by replacing GelSight marker motion with frame-to-frame pressure differences $\Delta P_t$ on the $32\times32$ grid, restricted to active contact regions exceeding 5\% of peak pressure. \textit{(2) XL-only}, treating accelerometer signals as acoustic data and following~\citet{Yoo_2021} to compute 13 MFCC features over 26 mel filterbanks with Quadratic Discriminant Analysis (QDA) for classification. \textit{(3) Piezo-XL}, combining predictions from both baselines via logical AND. All thresholds are fitted on the training set and applied directly to test data.

\section{Ablation Study on Encoder Architectures}\label{app:encoder}
The choice of per-modality encoder directly affects how well each signal stream is represented before fusion. A weak encoder can bottleneck the entire pipeline regardless of fusion quality. Since single-modality performance is substantially lower than the fused model (Table~\ref{tab:modality_ablation}), ablating encoders in isolation would conflate encoder quality with the absence of the complementary modality. We therefore ablate in the full Piezo+XL fused setting, keeping the fusion pipeline and one modality's encoder fixed while varying the other. This isolates each encoder's contribution to the fused performance and better reflects the deployment configuration. The encoder candidates for each modality are described below.
  
\paragraph{Piezo Encoder (ResNet-18)}
ResNet-18~\cite{resnet} processes piezo tactile images (32×32) through successive convolutional blocks with skip connections, producing a 128-dimensional feature representation via global average pooling.

\paragraph{Piezo Encoder (ViT-MAE)}
Tactile images contain rich spatial structure that benefits from self-supervised
representation learning~\cite{zhu2025touch}. We pretrain a ViT-based encoder on approximately 2 million piezoresistive contact frames: {$\sim$}2M simulated grasps (72 objects) from our simulator across 72 objects, and {$\sim$}30K real-world frames from operator-held UMI grasping across 10 objects. We use a masked autoencoding (MAE)~\cite{he2022mae}, treating each pressure image $\mathbf{p}_{t,s} \in \mathbb{R}^{H \times W}$ as a single-channel image divided into non-overlapping $4{\times}4$ patches, yielding an $8{\times}8$ grid of 64 tokens. During pretraining, 60--80\% of patches are randomly masked in 95\% of training samples;
the asymmetric encoder (8 transformer layers, 128-dimensional hidden size, 8 attention heads)
processes only visible patches, while a lightweight decoder (4 layers, 64-dimensional hidden size,
4 attention heads) reconstructs masked patches from encoded visible tokens and learnable mask tokens.
Reconstruction loss (MSE on raw pixel values) is computed only on masked patches, encouraging spatially coherent representations from partial observations. Pretraining data is collected from both real grasps and physics-based simulation across diverse object geometries. After pretraining, masking is disabled and the encoder operates on all 64 patches to extract the \texttt{[CLS]} token as the force embedding $\mathbf{z}^{p} \in \mathbb{R}^{d}$, where d = 128.

Three variants are considered when training \name: \textit{i) scratch}, training the encoder from random initialization; \textit{ii) freeze}, keeping the pretrained parameters fixed; and \textit{iii) finetune}, initializing from pretrained weights and updating them during training.

\paragraph{Accelerometer Encoder (MLP)}
A compact two-layer multilayer perceptron for processing low-dimensional accelerometer features. The network transforms input vectors through a hidden layer with 16 units and ReLU activation, followed by dropout regularization, and outputs feature embeddings of configurable dimension. This architecture provides a parameter-efficient baseline for encoding pre-processed accelerometer signals.

\paragraph{Accelerometer Encoder (1D CNN)}
A causal convolutional network for temporal sequence modeling of accelerometer data. The architecture employs three 1D convolutional layers with batch normalization and ReLU activations: an initial temporal convolution with causal padding (left-padding only) to prevent information leakage from future timesteps, followed by two 1×1 convolutions. The network outputs per-frame feature embeddings, preserving the temporal structure of the input sequence.

Table~\ref{tab:architecture_ablation} reports results on UMI OOD. Execution time is reported as the mean over 1000 runs on an NVIDIA RTX A4500 GPU. For Piezo, all ViT-MAE variants outperform ResNet-18, confirming that spatial self-supervised pretraining benefits pressure-map encoding. The frozen encoder slightly outperforms finetuning, suggesting that pretrained representations already capture the relevant spatial structure and that end-to-end adaptation risks overfitting. The Custom CNN matches ViT-MAE variants across all metrics while being substantially lighter, indicating that $32{\times}32$ tactile images do not require the capacity of a vision transformer. For XL, the contrast is sharper. An MLP operating on flattened spectrograms collapses to 29-frame median latency, showing that temporal structure in the spectrogram is essential and cannot be captured by a position-agnostic architecture. The 1D CNN recovers most of the performance, but Tiny-SSAST achieves the best results across all metrics, validating the transfer of audio-domain pretrained weights to accelerometer spectrograms. 

\begin{table}[h!]
    \caption{Encoder architecture ablation.
    Each row varies one modality's encoder while fixing the other and the fusion pipeline (Piezo+XL). Evaluated on UMI OOD. Selected encoders ($\dagger$) achieve the best overall performance. 
    }
    \label{tab:architecture_ablation}
    \begin{center}
    \scalebox{0.85}{
    \centering
    \begin{tabular}{llcccc|c}
    \toprule
        Modality & Architecture &
        Lat (fr) $\downarrow$ & FPR (\%) $\downarrow$ & Acc (\%) $\uparrow$ & F1 (\%) $\uparrow$ & Execution Time (ms)$\downarrow$ \\
    \midrule
        \multirow{5}{*}{Piezo}
          & ResNet-18                   &  2.00 &  4.19 & 93.04 & 92.91 & 3.08 \\
          & ViT-MAE (scratch)           &  2.67 &  2.26 & 95.38 & 95.33 & 5.31 \\
          & ViT-MAE (freeze)            &  2.67 &  2.02 & 95.75 & 95.51 & 5.33\\
          & ViT-MAE (finetune)          &  2.00 &  3.17 & 95.07 & 95.03 & 5.32 \\
          & Custom CNN$^\dagger$        &  \textbf{1.50} &  \textbf{1.57} & \textbf{95.99} & \textbf{95.75} & 2.33 \\
    \midrule
        \multirow{3}{*}{XL}
          & MLP                         & 30.00 &  4.04 & 90.69 & 90.38 & 1.07 \\
          & 1D CNN                      &  2.50 &  2.30 & 94.88 & 94.68 & 1.33 \\
          & Tiny-SSAST$^\dagger$        &  \textbf{1.50} &  \textbf{1.57} & \textbf{95.99} & \textbf{95.75} & 2.33 \\
    \bottomrule
    \end{tabular}
    }
    \end{center}
\end{table}


\section{\revised{Ablation on XL Augmentation}}\label{app:ablation_aug}
\revised{Table~\ref{tab:ablation_aug} evaluates the effect of XL augmentation, i.e., frame-level standardization and spectral masking. Removing it substantially reduces Tesollo transfer performance while leaving UMI performance largely unchanged, suggesting that augmentation may help suppress platform-specific vibration signatures while preserving slip-relevant features.}

\begin{table}[h!]
\centering
\caption{\revised{Effect of XL augmentation.}}
\label{tab:ablation_aug}
    \scalebox{0.80}{
    \begin{tabular}{@{}lcccc|cccc@{}}
    \toprule
    & \multicolumn{4}{c}{UMI OOD} & \multicolumn{4}{c}{Tesollo 2-Finger} \\
    \cmidrule(lr){2-5} \cmidrule(lr){6-9}
    & Lat (fr) $\downarrow$ & FPR (\%) $\downarrow$ & Acc (\%) $\uparrow$ & F1 (\%) $\uparrow$ & Lat (fr) $\downarrow$ & FPR (\%) $\downarrow$ & Acc (\%) $\uparrow$ & F1 (\%) $\uparrow$ \\
    \midrule
    w/o XL aug             & 1.50 & 1.90 & 94.80 & 94.49 & 4.00 & 11.31 & 89.19 & 87.54 \\
    \midrule
    SlipSense              & \textbf{1.50} & \textbf{1.57} & \textbf{95.99} & \textbf{95.75} & \textbf{3.00} & \textbf{1.14} & \textbf{95.89} & \textbf{94.39} \\
    \bottomrule
    \end{tabular}}
\end{table}

\section{\revised{Ablation on Fusion Strategy}}\label{app:ablation_fusion}
\revised{Table~\ref{tab:ablation_fusion} compares cross-attention against simpler fusion strategies under identical training. Cross-attention consistently outperforms concatenation and late fusion across all metrics on both UMI OOD and Tesollo 2-Finger.}

\begin{table}[h!]
\centering
\caption{\revised{Comparison of fusion strategies.}}
\label{tab:ablation_fusion}
\scalebox{0.80}{
    \begin{tabular}{@{}lcccc|cccc@{}}
    \toprule
    & \multicolumn{4}{c}{UMI OOD} & \multicolumn{4}{c}{Tesollo 2-Finger} \\
    \cmidrule(lr){2-5} \cmidrule(lr){6-9}
    & Lat (fr) $\downarrow$ & FPR (\%) $\downarrow$ & Acc (\%) $\uparrow$ & F1 (\%) $\uparrow$ & Lat (fr) $\downarrow$ & FPR (\%) $\downarrow$ & Acc (\%) $\uparrow$ & F1 (\%) $\uparrow$ \\
    \midrule
    Late fusion            & 2.50 & 1.65 & 95.28 & 95.06 & 4.00 & 1.23 & 94.86 & 92.72 \\
    Concatenation          & 2.00 & 1.65 & 95.51 & 95.28 & 4.50 & 1.33 & 95.04 & 93.07 \\
    \midrule
    SlipSense (cross-attn) & \textbf{1.50} & \textbf{1.57} & \textbf{95.99} & \textbf{95.75} & \textbf{3.00} & \textbf{1.14} & \textbf{95.89} & \textbf{94.39} \\
    \bottomrule
    \end{tabular}}
\end{table}

\section{Detection Latency by Slip Speed}\label{app:speed}
Table~\ref{tab:latency_by_speed} and Fig.~\ref{fig:latency_cdf} break down detection latency by slip speed. Faster crosshead motion produces larger friction transients in XL and faster pressure redistribution in Piezo, both of which strengthen the slip signature. At 18\,mm/s, over 90\% of events are detected within 5 frames (20.8\,ms). Slow slip at 2\,mm/s is the most challenging condition, with a long tail extending to 105 frames, reflecting cases where gradual pressure migration takes time to accumulate sufficient evidence for detection.

\begin{table}[h!]                           
\caption{Detection Latency by Slip Speed. Median and mean detection delay (frames) across five programmed crosshead speeds on UMI test episodes. $n$ denotes the number of slip events per speed. One frame = 4.17\,ms at 240\,Hz.}
\label{tab:latency_by_speed}                                                \begin{center}                                                                       \scalebox{1.0}{                              
      \begin{tabular}{lccccc}                                                      \toprule                                     
Speed & $n$ &  Median (fr) $\downarrow$ & Mean (fr) $\downarrow$ & Max (fr) $\downarrow$ \\    
    \midrule                                                           
        2\,mm/s  & 36 & 3.0 & 14.9 & 106 \\                                     
        5\,mm/s  & 31 & 3.0 &  6.8 &  39 \\                                     
        10\,mm/s  & 37 & 1.0 &  3.4 &  27 \\                                    
        15\,mm/s  & 32 & 2.5 &  5.8 &  35 \\                                   
        18\,mm/s  & 31 & 1.0 &  2.9 &  20 \\                                    \bottomrule                                                            \end{tabular}}
\end{center}                                 
  \end{table} 

\begin{figure}[h!]
  \centering
  \includegraphics[width=0.5\linewidth]{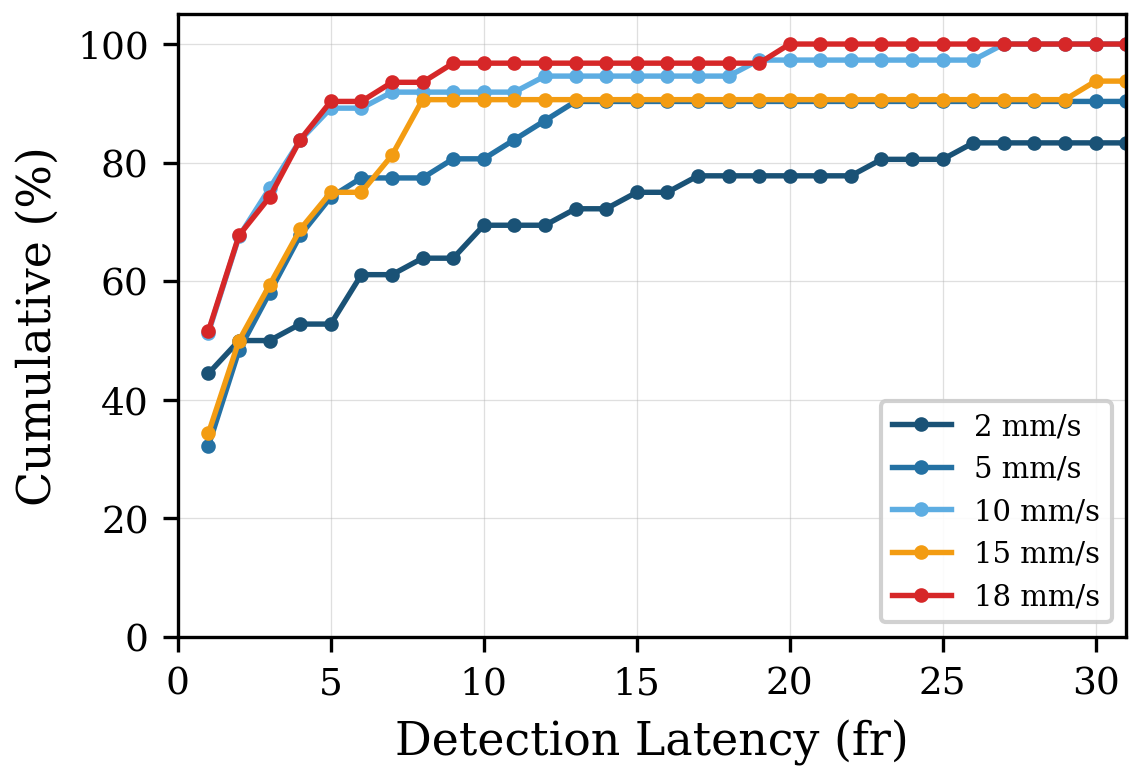}
  \caption{
    Detection latency CDF by slip speed. Cumulative distribution of slip-onset detection delay (in frames) across five crosshead speeds on UMI test episodes. Faster slip produces stronger friction transients and pressure migration, leading to earlier detection. At 18\,mm/s, over 90\% of events are detected within 5 frames (20.8\,ms).
  }
  \label{fig:latency_cdf}
\end{figure}

\section{Real-World Deployment}\label{app:realworld}
We validate \name~in two real-world settings (Fig.~\ref{fig:real_umi} and Fig.~\ref{fig:real_hand}). First, an operator holds a free-standing UMI gripper — as opposed to the table-mounted configuration used during data collection — introducing hand-induced motion that may excite the XL and varies Piezo contact pressure subtly. Despite these disturbances, \name~produces no false alarms, owing to the control scenarios in the training set. Second, we evaluate slip detection on the Tesollo hand across varied conditions, including different cable types, different pull directions, and rotational slip. \name~correctly identifies all slip events in 100 trials.

\begin{figure}[h!]
  \centering
  \includegraphics[width=0.8\linewidth]{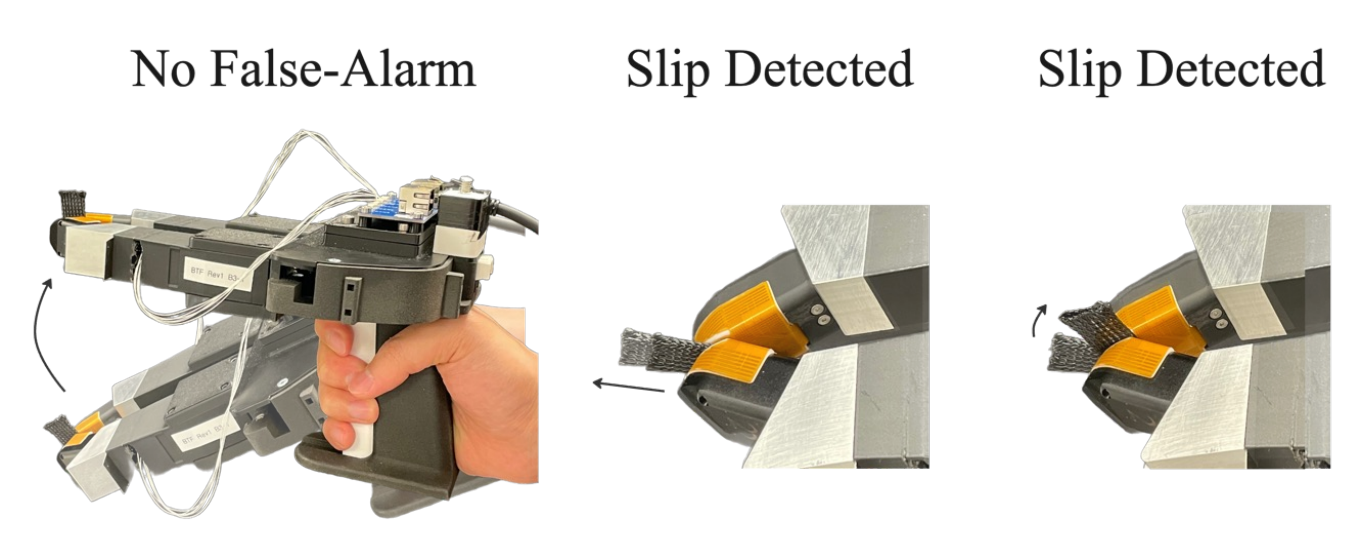}
  \caption{
    Real-world slip detection on UMI. Free UMI gripper held by an operator. Hand-induced motion may excite the XL and varies Piezo contact subtly, but \textit{\name} produces no false alarms (left). Translational slip (center) and rotational slip (right) can be correctly detected.
  }
  \label{fig:real_umi}
\end{figure}

\begin{figure}[h!]
  \centering
  \includegraphics[width=0.8\linewidth]{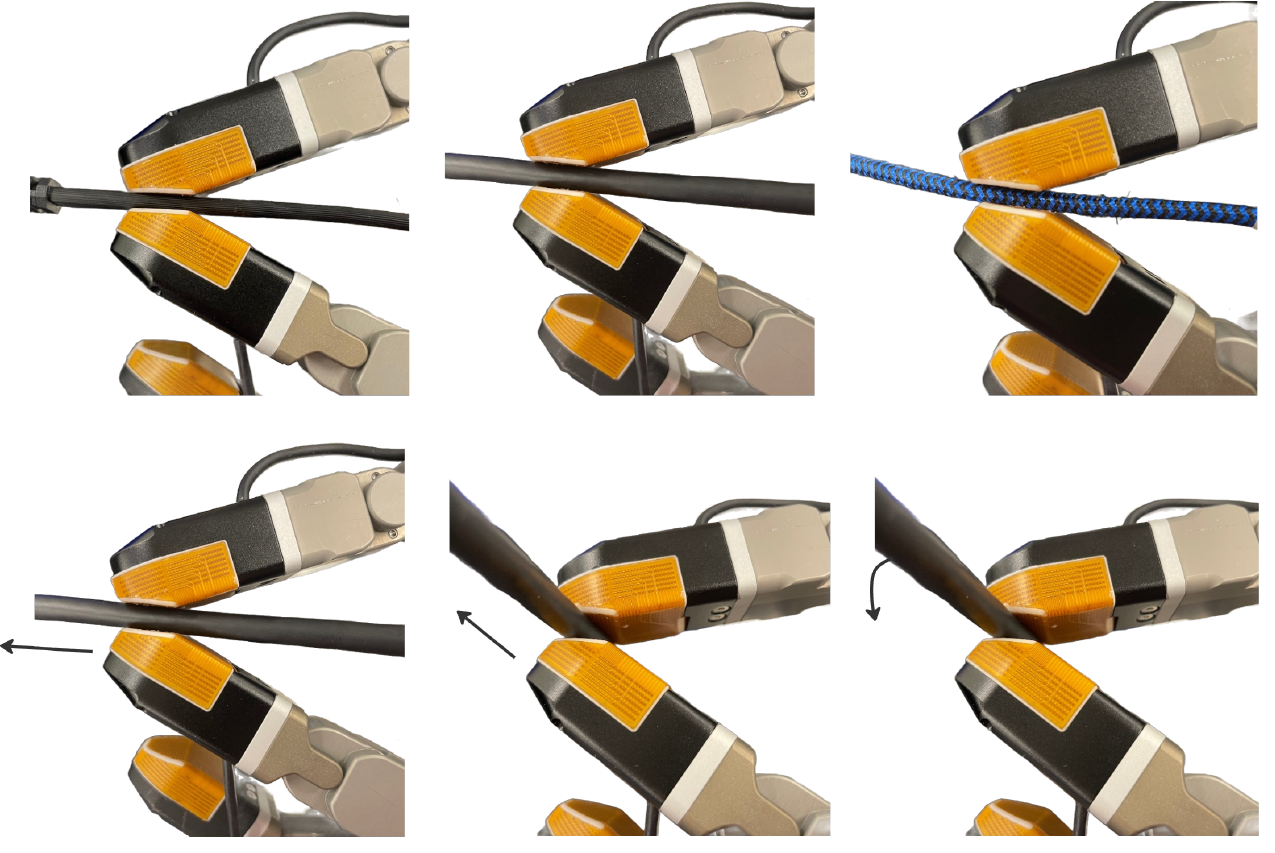}
  \caption{
    Real-world slip detection on the Tesollo hand. Training data contains only vertical linear slip. Through Piezo data augmentation (Appendix.~\ref{app:implementation}) and direction-invariant XL encoding (Appendix.~\ref{app:xl_invariance}), \textit{\name} generalizes to unseen directions and rotational slip. Top: hand grasp configurations on three representative cables; trials include additional types beyond those shown. Bottom: varied pull directions and rotational slip. \textit{\name} correctly identifies slip in all 100/100 trials.
  }
  \label{fig:real_hand}
\end{figure}

\clearpage
\section{Proof of Direction Invariance of Merged Log Mel Spectrogram} \label{app:xl_invariance}                    
  \paragraph{Notation.}                         
  Let $s(t) \in \mathbb{R}$ denote the scalar vibration signal at the contact
  interface, and let $\mathbf{d} = (d_x, d_y, d_z)^\top \in \mathbb{R}^3$ denote                                                                                                              
  the unit direction vector of slip or motion. The three-axis accelerometer
  measurements are $x(t)$, $y(t)$, $z(t) \in \mathbb{R}$, with corresponding
  STFT coefficients $X(f,\tau)$, $Y(f,\tau)$, $Z(f,\tau)$, $S(f,\tau) \in
  \mathbb{C}$. The $k$-th mel filterbank weight at frequency $f$ is $m_k(f) \geq
  0$, and $\mathcal{M}_k^s(\tau)$ denotes the $k$-th mel filterbank output applied
  to $|S(f,\tau)|^2$. The per-axis log-mel filterbank output is written
  $\mathcal{F}_k^{(i)}(\tau)$ for axis $i \in \{x, y, z\}$.

  \paragraph{Assumptions.}
  \begin{enumerate}
      \item \textbf{(Linear projection.)} The vibration at the contact interface
      is a scalar signal $s(t)$ propagating along a single fixed direction
      $\mathbf{d}$, such that the acceleration measured at the sensor satisfies:
      \begin{equation}
          x(t) = d_x \cdot s(t), \quad
          y(t) = d_y \cdot s(t), \quad
          z(t) = d_z \cdot s(t).
      \end{equation}

      \item \textbf{(Unit direction.)} $\mathbf{d}$ is a unit vector:
      \begin{equation}
          d_x^2 + d_y^2 + d_z^2 = 1.
      \end{equation}

      \item \textbf{(Quasi-stationary direction.)} $\mathbf{d}$ is constant
      within each STFT analysis window.

      \item \textbf{(Single vibration source.)} The sensor receives vibration
      from a single dominant source, such that superposition of independent
      multi-directional sources is negligible.
  \end{enumerate}

  \begin{proposition}
  Under assumptions \textbf{A1}--\textbf{A4}, the log-mel spectrogram computed
  from the summed power spectrum of the three accelerometer axes is invariant to
  the slip or motion direction $\mathbf{d}$.
  \end{proposition}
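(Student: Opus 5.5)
The plan is a direct computation that follows the linear model through each stage of the log-mel pipeline and uses $\|\mathbf{d}\|=1$ at the single point where the three axes are combined.

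\textbf{Step 1: STFT coefficients.} By A1 and A3, $\mathbf{d}$ is constant over each analysis window. The STFT is linear, so within each window
\begin{equation}
X(f,\tau)=d_x S(f,\tau),\quad Y(f,\tau)=d_y S(f,\tau),\quad Z(f,\tau)=d_z S(f,\tau).
\end{equation}
This is where A3 is needed: if $\mathbf{d}$ varied inside a window, the scalar gains could not be pulled out of the windowed Fourier integral.

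\textbf{Step 2: Summed power spectrum.} The per-axis power spectra are $|X|^2=d_x^2|S|^2$, and likewise for $Y$ and $Z$. Summing over axes and applying A2 gives
\begin{equation}
|X|^2+|Y|^2+|Z|^2=(d_x^2+d_y^2+d_z^2)\,|S(f,\tau)|^2=|S(f,\tau)|^2 .
\end{equation}

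\textbf{Step 3: Mel filterbank.} Apply the nonnegative weights $m_k(f)$ and sum over $f$. By linearity the $k$-th mel output of the summed spectrum equals $\mathcal{M}_k^s(\tau)$. Taking the logarithm yields $\log\mathcal{M}_k^s(\tau)$, which depends only on $s$ and not on $\mathbf{d}$. This proves the statement for the log-mel spectrogram of the summed power spectrum.

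\textbf{Step 4: Link to the log-sum-exp merge in (\ref{eq:logsum}).} The per-axis log-mel output is
\begin{equation}
\mathcal{F}_k^{(i)}(\tau)=\log\!\left(d_i^2\,\mathcal{M}_k^s(\tau)\right),
\end{equation}
again by linearity of the filterbank. Exponentiating and summing over $i\in\{x,y,z\}$ gives
\begin{equation}
\sum_i \exp\mathcal{F}_k^{(i)}(\tau)=\mathcal{M}_k^s(\tau),
\end{equation}
so the log-sum-exp merge equals $\log\mathcal{M}_k^s(\tau)$ exactly.

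\textbf{Main obstacle.} The real constraints are on the modeling side rather than the algebra:
\begin{itemize}
\item Step 4 holds only if no floor $\epsilon$ is added inside the log. I would either assume $\epsilon=0$, or note that with $\epsilon>0$ invariance holds only approximately, since axes with small $d_i$ get clipped at the floor.
\item A4 is essential. Two independent sources along different directions would produce cross terms and direction-dependent weights, so the power spectra would no longer factor as in Step 2.
\item The 10\,Hz high-pass filter and the per-frame standardization are linear or applied after merging, so they do not affect the argument.
\end{itemize}
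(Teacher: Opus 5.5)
Your proposal is correct and follows the paper's proof essentially step for step. Both arguments use STFT linearity under A1, cancel $d_x^2+d_y^2+d_z^2=1$ in the summed power spectrum, and then pass through the linear mel filterbank and the logarithm; you also make the role of A3 in Step~1 explicit, which the paper leaves implicit. Your Step~4 is the paper's LogSumExp corollary, and your $\epsilon$-floor caveat (which also covers $d_i=0$) is a sensible addition that the paper does not state.
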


  \begin{proof}
  \textbf{Step 1: STFT linearity.}
  By linearity of the STFT and \textbf{A1}:
  \begin{equation}
      X(f,\tau) = d_x \cdot S(f,\tau), \quad
      Y(f,\tau) = d_y \cdot S(f,\tau), \quad
      Z(f,\tau) = d_z \cdot S(f,\tau).
  \end{equation}

  \textbf{Step 2: Per-axis power spectra.}
  \begin{equation}
      |X(f,\tau)|^2 = d_x^2\,|S(f,\tau)|^2, \quad
      |Y(f,\tau)|^2 = d_y^2\,|S(f,\tau)|^2, \quad
      |Z(f,\tau)|^2 = d_z^2\,|S(f,\tau)|^2.
  \end{equation}

  \textbf{Step 3: Summed power spectrum.}
  Summing across axes and applying \textbf{A2}:
  \begin{equation}
      |X(f,\tau)|^2 + |Y(f,\tau)|^2 + |Z(f,\tau)|^2
      = \underbrace{\left(d_x^2 + d_y^2 + d_z^2\right)}_{=\,1} |S(f,\tau)|^2
      = |S(f,\tau)|^2.
  \end{equation}
  The direction-dependent coefficients cancel exactly.

  \textbf{Step 4: Mel filterbank.}
  Applying the linear mel filterbank with non-negative weights to the summed
  power:
  \begin{equation}
      \mathcal{M}_k(\tau)
      = \sum_f m_k(f)\Bigl(|X|^2 + |Y|^2 + |Z|^2\Bigr)
      = \sum_f m_k(f)\,|S(f,\tau)|^2
      = \mathcal{M}_k^s(\tau).
  \end{equation}

  \textbf{Step 5: Log.}
  \begin{equation}
      \log \mathcal{M}_k(\tau) = \log \mathcal{M}_k^s(\tau),
  \end{equation}
  which is identical to the log mel spectrogram of $s(t)$ alone, independent of
  $\mathbf{d}$.
  \end{proof}

  \begin{corollary}[LogSumExp equivalence]
  When per-axis log-mel features are computed separately, the direction-invariant
  log-mel spectrogram is equivalently obtained via \emph{LogSumExp} across axes.
  Specifically, for each axis $i \in \{x, y, z\}$:
  \begin{equation}
      \mathcal{F}_k^{(i)}(\tau)
      = \log\!\left(d_i^2 \cdot \mathcal{M}_k^s(\tau)\right)
      = 2\log|d_i| + \log \mathcal{M}_k^s(\tau).
  \end{equation}
  Applying LogSumExp across axes:
  \begin{equation}
      \log \sum_{i \in \{x,y,z\}} \exp\!\left(\mathcal{F}_k^{(i)}(\tau)\right)
      = \log\!\left[\underbrace{\left(d_x^2 + d_y^2 + d_z^2\right)}_{=\,1}
        \mathcal{M}_k^s(\tau)\right]
      = \log \mathcal{M}_k^s(\tau). \qed
  \end{equation}
  Hence, LogSumExp applied across per-axis outputs is a computationally efficient and numerically stable realization of the direction-invariant log-mel spectrogram.
  \end{corollary}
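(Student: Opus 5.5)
The plan is a direct computation that pushes assumption \textbf{A1} through each stage of the feature pipeline: STFT, power, axis summation, mel filterbank, log. The guiding idea is that every direction-dependent factor enters only as a multiplicative scalar $d_i$ per axis. The squared magnitude turns this into $d_i^2$, summing over axes produces $\|\mathbf{d}\|^2$, and \textbf{A2} sets this to one.

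Concretely, I first use \textbf{A3} to argue that $\mathbf{d}$ is constant over the analysis window of frame $\tau$. The windowed Fourier transform is linear, so I can pull $d_i$ outside and obtain $X(f,\tau)=d_x S(f,\tau)$, and likewise for $Y$ and $Z$. Here \textbf{A4} justifies that there is no second source term that would create cross terms. Taking squared moduli gives $|X|^2=d_x^2|S|^2$ and its analogues, because the $d_i$ are real. Summing the three axes and invoking \textbf{A2} collapses the total to $|S(f,\tau)|^2$. Since the mel filterbank is linear with weights $m_k(f)\ge 0$, it maps the summed power to $\mathcal{M}_k^s(\tau)$. The logarithm then preserves this equality, so the result is the log-mel spectrogram of $s$ alone, and $\mathbf{d}$ does not appear.

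The main obstacle is not algebraic. It is checking that each assumption is used exactly where needed. \textbf{A3} is required so that $d_i$ can legitimately leave the windowed integral; if $\mathbf{d}$ varied within the window, $d_i(t)$ would convolve with $S$ in frequency. \textbf{A4} is required so that no cross terms $2\,\mathrm{Re}(S_1\overline{S_2})$ arise in the power. A minor point is well-definedness of the log: I would either assume $\mathcal{M}_k^s(\tau)>0$ or note that the usual $\log(\cdot+\epsilon)$ floor is applied after summation and is still direction-free.

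Finally, to connect the result to the implemented LogSumExp aggregation in \eqref{eq:logsum}, I would observe that $\exp(\mathcal{F}_k^{(i)})=d_i^2\mathcal{M}_k^s$ whenever the log-mel is computed per axis. Summing over $i$ and taking the log therefore reproduces the same quantity, and this is why the per-axis LogSumExp realizes the invariant feature.
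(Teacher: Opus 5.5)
Your proposal is correct and follows the paper's route: the per-axis identity $\exp(\mathcal{F}_k^{(i)}(\tau)) = d_i^2\,\mathcal{M}_k^s(\tau)$ (from Steps 1--2 plus linearity of the mel filterbank), summed over axes and collapsed by \textbf{A2}, is exactly the paper's computation, and your re-derivation of the Proposition serves only to supply $\mathcal{M}_k^s$. Working with $\exp(\mathcal{F}_k^{(i)})$ instead of the expansion $2\log|d_i| + \log\mathcal{M}_k^s$ has a small advantage: it stays valid when some $d_i = 0$, where that term is $-\infty$. One correction on flooring: in the LogSumExp setting an $\epsilon$-floor is applied per axis before summation, not after, so the result is $\log(\mathcal{M}_k^s + 3\epsilon)$ --- still direction-free, but not literally the same quantity as flooring the summed power.
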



\end{document}